%% file: main.tex
\documentclass[letterpaper]{article} 

\usepackage[preprint]{aaai2027} 

\usepackage[hyphens]{url} 
\usepackage{graphicx} 
\usepackage{natbib} 
\usepackage{caption} 
\usepackage{amsmath}
\usepackage{amssymb}
\usepackage{booktabs}
\usepackage{multirow}
\usepackage{algorithm}
\usepackage{algorithmic}
\usepackage{xcolor}
\usepackage{subcaption}
\usepackage{enumitem}
\usepackage{amsthm}

\newtheorem{definition}{Definition}
\newtheorem{assumption}{Assumption}
\newtheorem{theorem}{Theorem}
\newtheorem{lemma}{Lemma}

\newcommand{\projectleadmark}{\textsuperscript{\ensuremath{\ddagger}}}
\newcommand{\equalmark}{%
    \textsuperscript{*}%
}

\newcommand{\correspondingmark}{%
    \textsuperscript{\ensuremath{\dagger}}%
}

\makeatletter

\newcommand{\teaserfigure}[1]{%
    \gdef\aaai@teaser{#1}%
}

\teaserfigure{}

\g@addto@macro\@maketitle{%
    \ifx\aaai@teaser\@empty
    \else
        \par
        \vskip 1.0ex
        {%
            \hsize\textwidth
            \linewidth\hsize
            \centering
            \aaai@teaser
            \par
        }%
        \vskip 0.8ex
    \fi
}

\title{
When Does Legacy Data Start to Help?\\
Emergent Transfer in Cross-Configuration Robot Learning
}

\author{
Tao Wang\textsuperscript{\rm 1,\rm 2}\equalmark,
Hudson Hou\textsuperscript{\rm 3}\equalmark,
Yingdong Hu\textsuperscript{\rm 2},
Yufeng Liu\textsuperscript{\rm 2,\rm 4},
Qinghai Li\textsuperscript{\rm 2},\\
Yingjie Jiang\textsuperscript{\rm 2},
Yingzhi Wang\textsuperscript{\rm 2,\rm 5},
Cheng Ma\textsuperscript{\rm 2},
Richard Wang\textsuperscript{\rm 2}\projectleadmark,
Yang Gao\textsuperscript{\rm 2,\rm 6}\correspondingmark
}

\affiliations{
\textsuperscript{\rm 1}
Huazhong University of Science and Technology,
\quad
\textsuperscript{\rm 2}
Spirit AI,
\quad
\textsuperscript{\rm 3}
Peking University
\\
\textsuperscript{\rm 4}
Shanghai Jiao Tong University,
\quad
\textsuperscript{\rm 5}
Harbin Institute of Technology,
\quad
\textsuperscript{\rm 6}
Tsinghua University

}

\begin{document}

\teaserfigure{%
    \includegraphics[
        width=0.92\textwidth
    ]{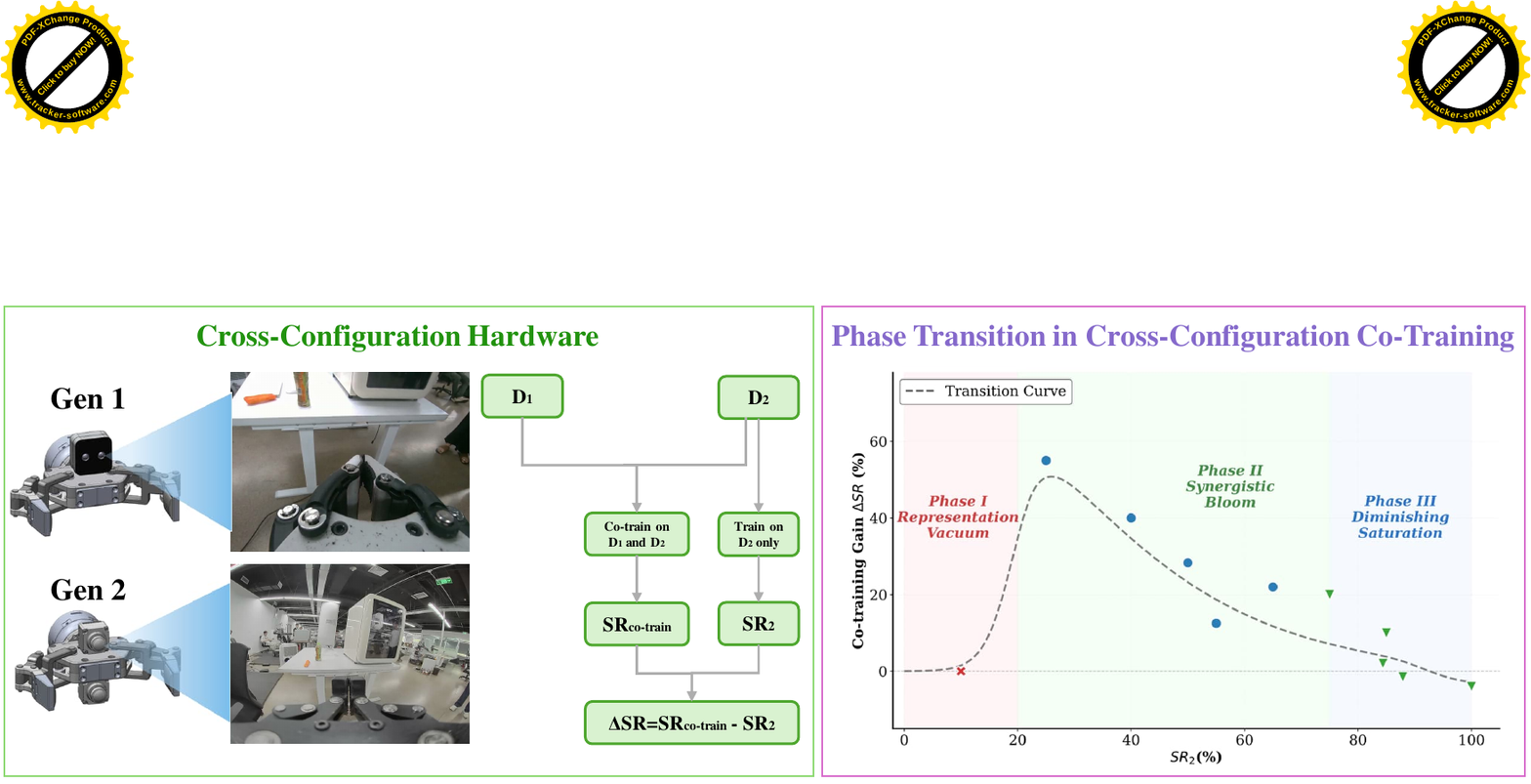}%
    \captionof{figure}{%
        The three-phase pattern of cross-configuration co-training.
        We compare policies trained on legacy data $D_1$,
        new-hardware data $D_2$, and their mixture.
        We denote the standalone success rate on the new configuration
        by $\mathrm{SR}_2$ and define the co-training gain as
        $\Delta\mathrm{SR}
        =
        \mathrm{SR}_{\mathrm{co\text{-}train}}
        -
        \mathrm{SR}_2$.
        As $\mathrm{SR}_2$ increases, legacy data is ineffective below
        the task-dependent transfer threshold $\tau(T)$, produces its
        largest gains just above the threshold, and yields diminishing
        returns at high baselines.
    }%
    \label{fig:three_phase_overview}%
}

\maketitle
\begingroup
\renewcommand{\thefootnote}{\fnsymbol{footnote}}
\footnotetext[1]{These authors contributed equally.}
\footnotetext[2]{Corresponding author: Yang Gao.}
\footnotetext[3]{Project lead: Richard Wang.}
\endgroup
\begin{abstract}
Robotic hardware evolves over time, but demonstration data is often tied
to a specific sensor and actuator configuration. This raises a practical
and underexplored question: when does legacy data begin to benefit an
upgraded robot? We study this question on a wheeled humanoid platform
across two hardware generations, where both the camera and gripper are
changed while the overall morphology remains fixed. Contrary to the
common assumption that more cross-configuration data is always helpful,
we observe a grokking-like transition: legacy data remains ineffective
until the upgraded configuration acquires a minimum level of task
competence, after which co-training gains rise sharply before diminishing
near saturation. We hypothesize that this task-dependent transition is
governed by a transfer threshold and characterize the resulting
three-phase pattern. Across real-robot manipulation tasks, we observe all
three phases: no measurable benefit at low competence
($10.0\% \rightarrow 10.0\%$), a sharp gain after crossing the threshold
($23.3\% \rightarrow 86.7\%$ on flower insertion), and diminishing
returns at high competence ($85.0\% \rightarrow 93.3\%$ on pen
insertion). We provide a theoretical account based on gradient alignment
and residual policy uncertainty, and derive a phase-aware rule for
deciding when to collect more new-hardware data and when to reuse legacy
demonstrations. We further validate this three-phase pattern on a mobile
dual-arm watering task, with results consistent with our predictions.
\end{abstract}

\input{sections/introduction}
\input{sections/related_work}

\input{sections/setup}
\input{sections/empirical_discovery}
\input{sections/analysis}
\input{sections/from_theory_to_practice}
\input{sections/conclusion}

%

\bibliography{references}

\clearpage
\appendix
\input{sections/appendix}

\end{document}

%% file: sections/introduction.tex
\section{Introduction}
\label{sec:introduction}

Driven by real-world physical interaction data, embodied AI has made steady progress in applications ranging from domestic manipulation and cooking to assistive care.
Recent large-scale robot learning and vision-language-action (VLA) models have shown that broad robot datasets and high-capacity policies can improve generalization across tasks, scenes, and embodiments~\citep{brohan2022rt1,brohan2023rt2,openx2024,octo2024,kim2024openvla}.
However, robots deployed in the real world inevitably undergo hardware iterations.
Sensors are upgraded to provide wider fields of view or higher sensing fidelity, while end-effectors are redesigned to improve dexterity. These changes alter the robot’s visual inputs and the low-level execution of gripper commands.
Each iteration therefore creates a practical data-reuse problem: large amounts of teleoperated demonstration data collected on legacy hardware may become difficult to reuse, even though such data is costly to acquire and often contains valuable task structure~\citep{argall2009survey,dasari2019robonet,ebert2022bridge}.

A natural response is to co-train abundant legacy demonstrations with a smaller amount of data collected on the upgraded robot.
This strategy is related to cross-embodiment and generalist robot learning, where policies are trained across heterogeneous robots, sensors, action spaces, and datasets
~\citep{openx2024,octo2024,kim2024openvla,zheng2025xvla,wang2024hpt}.
More directly, RoboNet and BridgeData show that prior robot data can reduce target-domain collection costs when combined with target data~\citep{dasari2019robonet,ebert2022bridge}.
Together, these studies establish that prior robot data can be useful, but they do not show whether it is beneficial from the outset after a hardware change.
Hardware iteration can introduce configuration-specific shifts in camera geometry, sensing fidelity, gripper control, and low-level action execution.
Transfer learning and domain adaptation suggest that source data may provide little benefit, or even interfere with learning, when the source and target domains are insufficiently aligned ~\citep{bendavid2010theory,rosenstein2005transfer,
wang2019characterizing,zhang2023survey}.
This leaves a central question unanswered: when does legacy data start to benefit an upgraded robot?

To answer this question, we compare policies trained only on new-hardware demonstrations with policies co-trained on legacy and new-hardware data.
Across several real-robot tasks and new-hardware data settings, we examine how co-training gain varies with the upgraded robot’s standalone success rate.
Flower insertion provides a particularly clear example of this pattern.
With 15.6 hours of high-quality new-hardware demonstrations, the standalone policy reaches only 23.3\% success, while co-training with 17 hours of legacy data raises performance to 86.7\%, a gain of 63.4 percentage points.
In contrast, with 4.3 hours of new-hardware data and a standalone success rate of 10.0\%, the same co-training strategy leaves performance unchanged at 10.0\% (Figure~1).
Thus, co-training gain can increase sharply after a limited improvement in standalone success.

Taken together, the results across tasks and data settings reveal a three-phase pattern.
At low standalone performance, legacy data provides no measurable benefit.
Once the new-hardware policy has learned sufficient task structure, the gain from co-training rises sharply.
As standalone performance approaches saturation, the marginal gain from legacy data decreases again.
We refer to the transition from ineffective to useful transfer in the intermediate regime as \emph{emergent transfer}.
This phenomenon is reminiscent of emergent abilities and grokking-like behavior~\citep{wei2022emergent,power2022grokking}.
However, the transition here occurs as the target configuration's standalone performance improves, rather than as model scale or training time increases.

We characterize this transition by a task-dependent transfer threshold, denoted by $\tau(T)$, defined as the minimum standalone performance above which legacy data is expected to improve performance on task $T$.
Below the threshold, the target policy has not yet learned sufficient task structure to use legacy demonstrations effectively.
Above it, gradients from legacy and new-hardware data can become better aligned, producing large gains while substantial room for improvement remains.
Near saturation, this room shrinks, and the marginal value of legacy data declines.
Our theoretical account is motivated by prior work on gradient
interference in joint learning and uncertainty-aware robot imitation
~\citep{yu2020gradient,pignat2019bayesian}.
Specifically, gradient alignment explains when legacy data begins to help, while residual policy uncertainty explains why its benefit diminishes near saturation.
We further connect the transfer threshold to a task-complexity estimate $H(T)$ and use the resulting model to guide new-hardware data collection.

Our contributions are summarized as follows:
\begin{enumerate}[topsep=0pt,itemsep=0pt,leftmargin=*]
    \item We identify and statistically support an emergent three-phase transfer pattern in cross-configuration robot learning: legacy data is ineffective at low target competence, highly beneficial at intermediate competence, and subject to diminishing returns near saturation.

    \item We model this transition using a task-dependent transfer threshold $\tau(T)$ and provide a theoretical account based on gradient alignment, residual policy uncertainty, and domain adaptation.
    
    \item We formulate a phase-aware data collection rule that collects new-hardware demonstrations until the transfer threshold is crossed and then introduces legacy data for co-training, reducing collection time from $8$ hours to $1.5$ hours on a held-out task.
\end{enumerate}

%% file: sections/related_work.tex
\section{Related Work}
\label{sec:related_work}

\paragraph{Robot demonstrations and data quality.}
Learning from demonstration studies how robot policies can be acquired from expert trajectories~\citep{argall2009survey,ross2011dagger,ho2016gail}.
More recent behavior-cloning and sequence-modeling methods improve visuomotor learning from offline demonstrations, including multimodal and long-horizon behaviors~\citep{mandlekar2021what,shafiullah2022behavior,chi2023diffusionpolicy}.
Because robot demonstrations are expensive to collect, their value depends not only on dataset size but also on data quality, task coverage, and the state distribution induced by the demonstrations~\citep{belkhale2023dataquality}.
We study the same data-efficiency problem after a hardware upgrade, where changes in sensing and end-effectors shift both the observation and action distributions.

\paragraph{Cross-robot learning and prior-data reuse.}
Large-scale robot learning increasingly trains policies across tasks, datasets, and robot platforms.
Open X-Embodiment aggregates data from diverse robot platforms, and the resulting RT-X models show that cross-robot training can improve downstream performance~\citep{openx2024}.
Octo and OpenVLA provide generalist policies that can be adapted to new observation spaces, action spaces, and platforms~\citep{octo2024,kim2024openvla}, while X-VLA and HPT introduce architectural mechanisms for handling embodiment heterogeneity~\citep{zheng2025xvla,wang2024hpt}.
RoboNet and BridgeData further show that data collected from other robots or domains can reduce target-domain data requirements~\citep{dasari2019robonet,ebert2022bridge}.
Together, these studies establish that prior robot data can support learning in a target domain.
Our focus is more specific: after a fixed hardware change, at what level of standalone target performance does legacy data begin to help?

\begin{figure*}[t]
\centering
\includegraphics[width=\textwidth]{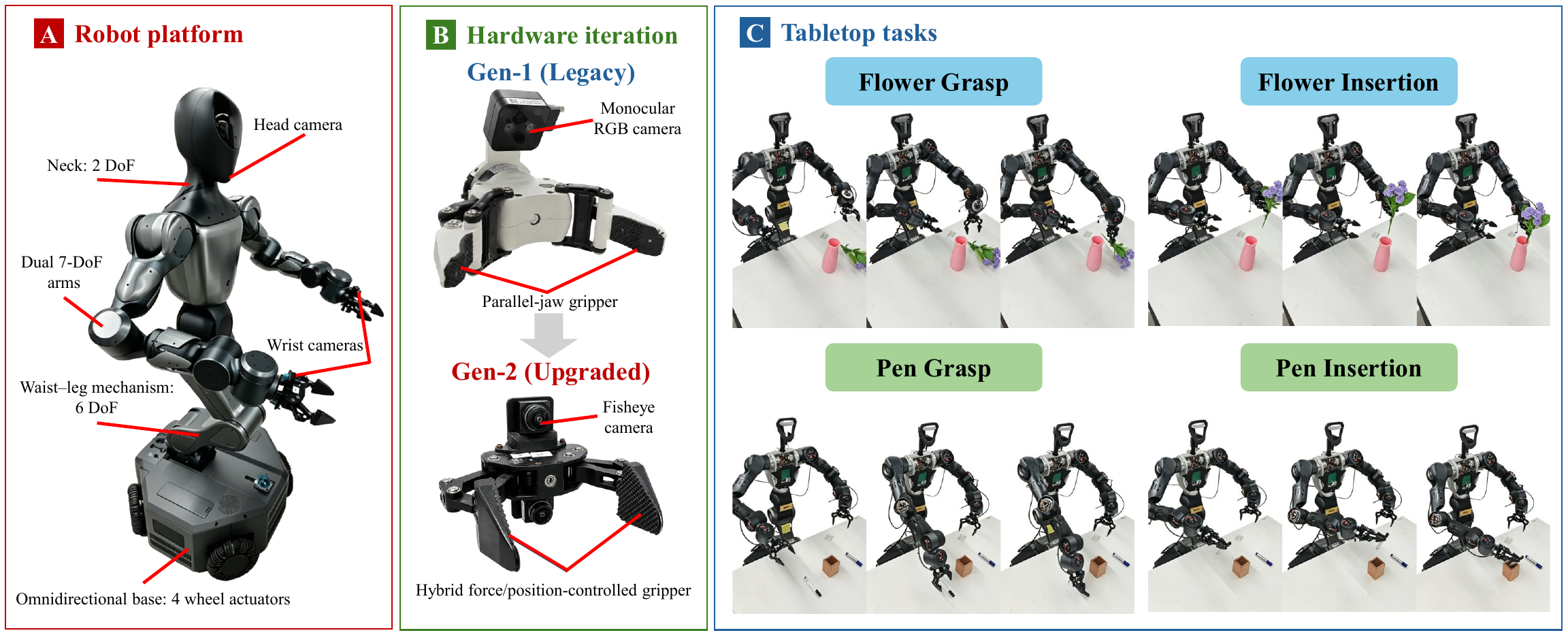}
\caption{
Robot platform, hardware iteration, and tabletop manipulation tasks.
(a) The robot comprises dual 7-DoF arms, a 6-DoF waist--leg mechanism, a 2-DoF neck, and a four-wheel omnidirectional base.
(b) Gen-2 replaces the monocular RGB cameras and position-controlled gripper of Gen-1 with fisheye cameras and a hybrid force/position-controlled gripper.
(c) The main experiments include flower grasp, flower insertion, pen grasp, and pen insertion.
}
\label{fig:platform_tasks}
\end{figure*}

\paragraph{Transfer learning, negative transfer, and emergent behavior.}
Transfer learning and domain adaptation provide a general framework for understanding when source-domain data improves target-domain performance~\citep{pan2010survey,bendavid2010theory}.
Classical bounds relate target error to source error, source--target divergence, and the error of the best shared hypothesis~\citep{bendavid2010theory}.
When the two domains are poorly aligned, source data may provide little benefit or lead to negative transfer~\citep{rosenstein2005transfer,wang2019characterizing,zhang2023survey}.
In our setting, the two hardware configurations share task semantics and arm kinematics but differ in camera observations and gripper control.
The transition in co-training gain resembles emergent abilities and grokking, although we observe it across levels of standalone target performance rather than model scale or optimization time~\citep{wei2022emergent,power2022grokking}.

\paragraph{Continual learning and hardware iteration.}
Continual robot learning studies how agents acquire new skills while retaining previously learned capabilities~\citep{kirkpatrick2017ewc,parisi2019continual,wan2024lotus}.
Hardware iteration also reuses prior experience, but presents a different problem.
In our setting, legacy and new-hardware data are available at the same time, so catastrophic forgetting is not the main concern.
We therefore study simultaneous co-training across hardware generations rather than sequential task acquisition.

%% file: sections/setup.tex
\section{Experimental Setup} 
\label{sec:experimental_setup}

\paragraph{Robot configurations.}
We study one hardware iteration on a wheeled humanoid robot with $26$ actuated DoF: four independently actuated omni wheels, dual $7$-DoF arms, a $6$-DoF waist--leg mechanism, and a $2$-DoF neck.
The mobile base is disabled for the main tabletop experiments, leaving $22$ active DoF, and is enabled for the mobile watering task.
The two generations share the same base, arm kinematics, and overall morphology, but differ in their visual sensing and gripper-control interfaces.
Gen-1 uses three monocular RGB cameras, each with a resolution of $640 \times 480$ and a field of view of $87^\circ \times 58^\circ$, together with an $8$\,cm position-controlled parallel-jaw gripper.
Gen-2 replaces them with three fisheye cameras, each with a resolution of $1920 \times 1536$ and a field of view of $120^\circ \times 120^\circ$, and an $8$\,cm hybrid force/position-controlled gripper equipped with wrist force--torque sensing.

\begin{table}[t]
\centering
\small
\small
\resizebox{\linewidth}{!}{%
\begin{tabular}{lccc}
\toprule
Task & Gen-1 legacy & Gen-2 early & Gen-2 refined \\
\midrule
Pen insertion    & $45.54$\,h & $18.63$\,h & $13.58$\,h \\
Flower insertion & $17.10$\,h & $4.31$\,h  & $15.60$\,h \\
\bottomrule
\end{tabular}
}
\caption{
Teleoperation data used in the main insertion experiments.
The refined column denotes a separately collected Gen-2 batch.
}
\label{tab:main_data_hours}
\end{table}

\paragraph{Tasks.}
The main study includes four fixed-base tabletop tasks: pen grasp, pen insertion, flower grasp, and flower insertion.
The grasping tasks are short-horizon and relatively tolerant to pose errors, whereas the insertion tasks require more precise alignment.
Flower insertion is the most difficult task because the robot must grasp a compliant stem and insert it into a narrow vase without bending the flower.
Object poses are randomized within a fixed workspace, and the robot base remains stationary throughout these trials.

\begin{table*}[t]
\centering
\small
\begin{tabular*}{\textwidth}{l@{\extracolsep{\fill}}ccccc}
\toprule
\textbf{Task} & \textbf{Config} & \textbf{Single} & \textbf{Co-train} & $\Delta$ & \textbf{$p$} \\
\midrule
Pen Grasp     & Gen-1 & 88.3\% & 86.7\% & $-$1.7 & 1.0 \\
              & Gen-2 & 100\% & 100\% & 0 & n.s. \\
Pen Insertion    & Gen-1 & 85.0\% & 86.7\% & +1.7 & 1.0 \\
              & Gen-2 & 56.7\% & 65.0\% & +8.3 & 0.46 \\
Flower Grasp  & Gen-1 & 100\% & 96.7\% & $-$3.3 & 0.50 \\
              & Gen-2 & 100\% & 100\% & 0 & n.s. \\
Flower Insertion & Gen-1 & 50.0\% & 78.3\% & +28.3 & 0.002 \\
              & \textbf{Gen-2} & \textbf{10.0\%} & \textbf{10.0\%} & \textbf{0} & \textbf{n.s.} \\
\bottomrule
\end{tabular*}
\vspace{1mm}
\caption{
Phase~I results using early Gen-2 data and Gen-1 legacy data.
The bold row marks the low-baseline Gen-2 flower-insertion case.
$p$ values are from two-sided Fisher's exact tests.
}
\label{tab:phase1}
\end{table*}

\paragraph{Policy and co-training.}
All policies are initialized from a pretrained $\pi_{0.5}$ vision-language-action model~\citep{black2025pi05} and fine-tuned using behavior cloning.
The policy takes multi-view RGB observations, proprioceptive states, and a language instruction as input, and predicts a $60$-step action chunk at each forward pass for closed-loop execution.
Both hardware generations use the same normalized arm-action representation, and the gripper output represents the desired opening position.
The difference between position-controlled and hybrid force/position-controlled gripper execution is handled by the corresponding configuration-specific low-level controller.
A single-configuration policy is trained using demonstrations from only one hardware generation, whereas a co-trained policy uses demonstrations from both generations.
Unless otherwise stated, co-training samples Gen-1 and Gen-2 data with equal probability, rather than in proportion to their respective demonstration hours.
The policy receives no explicit hardware-generation label.

\paragraph{Demonstration data.}
All demonstrations are collected through teleoperation at $30$\,Hz.
The central insertion experiments use one legacy Gen-1 dataset and multiple Gen-2 data batches, as summarized in Table~\ref{tab:main_data_hours}.
The quality-refined Gen-2 batches use stricter operator training, per-trajectory quality filtering, and tighter object-pose tolerances than the early batches.
Individual batches or their union are used depending on the experimental condition described in Section~\ref{sec:discovery}.

\paragraph{Evaluation and held-out validation.}
We report end-to-end task success rate, counting a trial as successful only when the full task is completed within the time limit without human intervention.
Each experimental condition is evaluated over $n=60$ real-robot trials, and co-training gain is defined as
\[
\Delta \mathrm{SR}
=
\mathrm{SR}_{\mathrm{co\text{-}train}}
-
\mathrm{SR}_{\mathrm{single}} .
\]
Differences between single-configuration and co-trained policies are evaluated using a two-sided Fisher's exact test, and Wilson $95\%$ confidence intervals are reported where applicable.
The mobile dual-arm watering task is held out from the experiments used to identify the three-phase pattern.
It uses the full $26$-DoF configuration and evaluates whether the phase-aware data collection rule extends to a mobile manipulation task.
Additional architecture, optimization, data-collection, and evaluation details are provided in the Appendix A.

%% file: sections/empirical_discovery.tex
\section{Empirical Discovery: A Three-Phase Model}
\label{sec:discovery}

Using the evaluation protocol described above, we analyze how co-training gain varies with the standalone success rate of each task--configuration pair.
The results reveal three regimes: no measurable gain at low baselines, large gains at intermediate baselines, and diminishing gains near saturation.

\begin{table}[t]
\centering
\small
\begin{tabular*}{0.9\linewidth}{@{}l@{\extracolsep{\fill}}cc@{}}
\toprule
\textbf{Task} & \textbf{Single} & \textbf{Co-train} \\
\midrule
Pen Insertion
& 71.7\%
& 98.3\% \\

\textbf{Flower Insertion}
& \textbf{23.3\%}
& \textbf{86.7\%} \\
\bottomrule
\end{tabular*}
\caption{
Phase~II results using the quality-refined Gen-2 datasets summarized in
Table~\ref{tab:main_data_hours}.
Co-training improves flower insertion from $23.3\%$ to $86.7\%$
and pen insertion from $71.7\%$ to $98.3\%$.
}
\label{tab:phase2}
\end{table}

\subsection{Discovery 1: Phase I Failure---Co-training Below the Threshold}
\label{sec:phase1}

We first co-train early Gen-2 data with Gen-1 legacy data and evaluate the resulting policy on both hardware generations.
Table~\ref{tab:phase1} shows the central negative case.
With only $4.31$h of early Gen-2 flower-insertion data, the upgraded hardware reaches a $10.0\%$ standalone baseline.
Co-training with $17.10$h of legacy Gen-1 flower data leaves performance unchanged at $10.0\%$.
Thus, in this low-baseline setting, legacy demonstrations provide no measurable benefit.

The contrast within the same run is important.
The amount of legacy data and the co-training procedure are fixed, yet different task--configuration pairs respond differently.
Gen-2 flower insertion remains at $10.0\%$, while Gen-1 flower insertion improves from $50.0\%$ to $78.3\%$.
This contrast motivates a task-dependent transfer threshold $\tau(T)$, below which legacy data provides no positive expected gain for task $T$.

\subsection{Discovery 2: Phase II---Large Gains at Intermediate Baselines}
\label{sec:phase2}

The Phase I failure raises a sharper question: does co-training require a high target baseline, or merely enough target-domain structure?
Using quality-refined Gen-2 data, flower insertion reaches a modest $23.3\%$ standalone success rate.
Co-training then lifts it to $86.7\%$, a gain of $63.4$ percentage points.
Pen insertion shows the same phase behavior, improving from $71.7\%$ to $98.3\%$.

After the new configuration crosses $\tau(T)$, co-training can produce large gains.
These results suggest that legacy data is most useful when the target policy has learned basic task structure but still has substantial room for improvement.

\subsection{Discovery 3: Phase III Saturation}
\label{sec:phase3}

At high standalone baselines, co-training gains shrink.
For Gen-2 pen insertion, the $71.7\%$ baseline gains $+26.6$ points, but after adding more Gen-2 data and reaching an $85.0\%$ standalone baseline, the same task gains only $+8.3$ points.
Near-ceiling Gen-1 grasping tasks show even smaller changes, including mild negative values that are not statistically significant.
Full saturation results are provided in Appendix~\ref{app:pen_insertion_details}.
This regime indicates that once the target policy already solves most of the task, legacy data has limited remaining uncertainty to reduce.

\subsection{Synthesis: The Three-Phase Model}
\label{sec:synthesis}

\begin{figure}[t]
\centering
\includegraphics[width=\columnwidth]{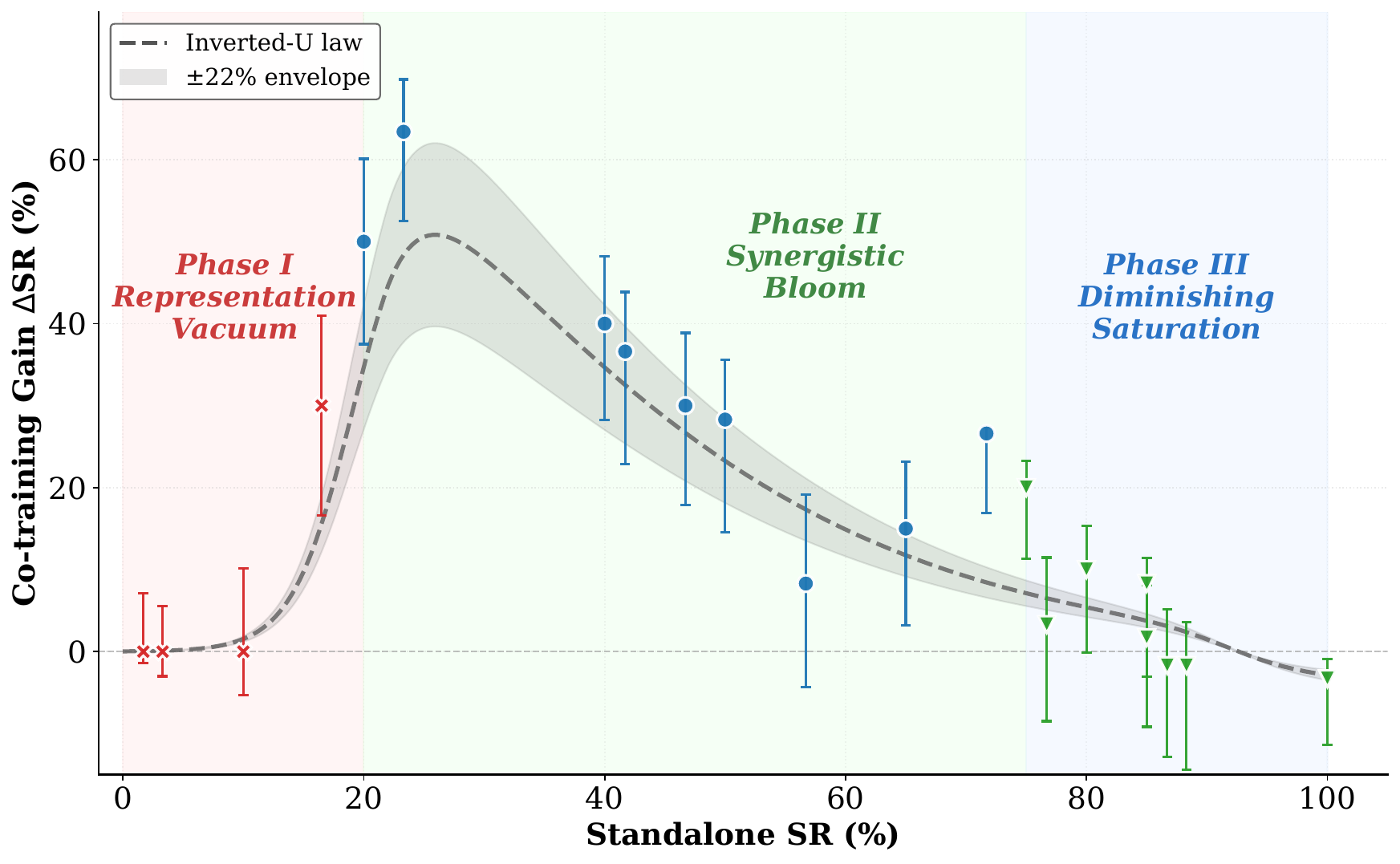}
\caption{
Phase transition in cross-configuration co-training.
Co-training gain $\Delta\mathrm{SR}$ is plotted against standalone success rate for real task-configuration pairs, with Wilson 95\% confidence intervals.
The dashed curve shows the inverted-U trend predicted by Theorem~\ref{thm:invertedU}: gains are near zero in Phase~I, peak in Phase~II, and diminish in Phase~III.
}
\label{fig:phase_curve}
\end{figure}

Figure~\ref{fig:phase_curve} summarizes the resulting pattern.
Co-training gain follows an inverted-U relationship with standalone success rate: it is near zero below $\tau(T)$, peaks after the target configuration has learned basic task structure, and diminishes near saturation.

\begin{table}[t] \centering \small \begin{tabular}{lccl} \toprule \textbf{Phase} & \textbf{Baseline} & \textbf{$\Delta$} & \textbf{Mechanism} \\ \midrule \textbf{I} & $<$15--20\% & $\approx 0$ & Representation vacuum \\ \textbf{II} & 20--75\% & +15 to +63 & Synergistic bloom \\ \textbf{III} & $>$75\% & +0 to +15 & Diminishing saturation \\ \bottomrule \end{tabular} \caption{The three-phase model.} \label{tab:three_phases} \end{table}

Two consequences follow.
First, the phase model is bidirectional: co-training is not merely source-to-target transfer.
In Table~\ref{tab:phase1}, the same co-trained policy that fails to lift Gen-2 flower insertion significantly improves Gen-1 flower insertion, while high-baseline Gen-1 tasks remain within the noise floor.
Second, data quality controls how quickly a task moves across phases, but does not bypass the threshold.
High-quality Gen-2 data reaches stronger standalone baselines with fewer hours; the response to co-training is then governed by the resulting phase.
Additional quality results are provided in Appendix~\ref{app:pen_insertion_details}.

%% file: sections/analysis.tex
\section{Why the Phase Structure Exists}
\label{sec:analysis}

The empirical results in Section~\ref{sec:discovery} suggest that legacy data becomes useful only after the target policy has learned basic task structure, and that its benefit declines as standalone performance approaches saturation.
We formalize this intuition with two components: a task-dependent transfer threshold that marks when legacy data begins to provide a positive training signal, and a residual-uncertainty term that captures the remaining room for improvement.

\paragraph{Stage structure as the order parameter.}
For a task $T$, let a trajectory decompose into latent stages
$\mathcal{Z}(T)$, such as \textsc{approach}, \textsc{grasp}, and
\textsc{insert}.
We define the stage decodability of the target policy as
\begin{equation}
\rho_c(T;\theta)
=
\max_g
\Pr_{(s,o)}
\left[
g(\phi_\theta(s,o))=z(s,o)
\right],
\label{eq:stage_decodability}
\end{equation}
where $\phi_\theta(s,o)$ is the internal representation and
$z(s,o)$ is the ground-truth task stage.
We assume that $\rho_c(T;\theta^{\mathrm{single}})$ is
non-decreasing with standalone success rate, so that
$\mathrm{SR}$ serves as an observable proxy for latent stage
decodability.

\begin{definition}[Task-dependent transfer threshold]
\label{def:tau}
For task $T$, the transfer threshold $\tau(T)$ is the smallest
standalone success rate above which co-training with legacy data
yields positive expected gain:
\begin{equation}
\tau(T)
=
\inf
\left\{
\mathrm{SR}:
\mathbb{E}
\left[
\Delta\mathrm{SR}\mid\mathrm{SR}
\right]>0
\right\}.
\label{eq:transfer_threshold}
\end{equation}
\end{definition}

\paragraph{Transfer-threshold transition.}
Let $g_S$ and $g_T$ denote the expected gradients from legacy and
target data.
We assume that the two configurations share a stage-conditional
objective up to configuration-specific residual error.
When stages are not decodable, legacy samples may be associated
with mismatched stage-conditional behavior, resulting in
non-positive expected gradient alignment.
Once the representation encodes stage identity, same-stage legacy
and target gradients can point toward a shared conditional optimum.

\begin{theorem}[Transfer-threshold transition]
\label{thm:threshold}
Under monotone coupling between stage decodability and standalone
success, a shared stage-conditional objective, and continuity of
expected gradient alignment, there exists a critical decodability
$\rho_{\mathrm{crit}}(T)$ and a corresponding transfer threshold $\tau(T)$ such that
\begin{equation}
\begin{aligned}
\mathbb{E}\langle g_S,g_T\rangle &\le 0,
&& \mathrm{SR}<\tau(T),\\
\mathbb{E}\langle g_S,g_T\rangle &>0,
&& \mathrm{SR}>\tau(T).
\end{aligned}
\label{eq:gradient_transition}
\end{equation}
Thus, legacy data provides no first-order improvement below the
threshold and becomes a positive training signal above it.
\end{theorem}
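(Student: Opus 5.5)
The plan is to treat the expected gradient alignment as a scalar function of stage decodability, locate the transition by an intermediate-value/level-set argument, and then transport the critical decodability to the success-rate axis using the monotone coupling.

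\textbf{Alignment as a function of decodability.} Define
\[
A(\rho)=\mathbb{E}\langle g_S,g_T\rangle
\]
evaluated at parameters whose stage decodability equals $\rho$. By the shared stage-conditional objective, I would decompose each expected gradient into a same-stage term plus a residual:
\[
g_S=\textstyle\sum_{z}\pi_S(z)\,\nabla\ell_z+\epsilon_S,\qquad g_T=\textstyle\sum_{z}\pi_T(z)\,\nabla\ell_z+\epsilon_T .
\]
Here $\ell_z$ is the shared conditional loss for stage $z$. The assignment of legacy samples to stages is only correct with probability $\rho_c(T;\theta)$, as in Eq.~\eqref{eq:stage_decodability}. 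I would then write
\[
A(\rho)=\rho\,a_{\mathrm{match}}+(1-\rho)\,a_{\mathrm{mis}}+r,
\]
where:
\begin{itemize}
\item $a_{\mathrm{match}}>0$ is the same-stage alignment, since both gradients point toward a common conditional optimum;
\item $a_{\mathrm{mis}}\le 0$ is the alignment when stages are mismatched, matching the stated assumption of non-positive alignment when stages are not decodable;
\item $r$ is the configuration-specific residual.
\end{itemize}
More generally I would only use that $A$ is continuous (the stated continuity assumption), that $A(\rho_{\min})\le 0$ at chance-level decodability, and that $A(\rho)>0$ at full decodability. The latter needs the residual to be dominated by the shared term, i.e.\ $a_{\mathrm{match}}>|r|$.

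\textbf{Locating the critical decodability.} Set
\[
\rho_{\mathrm{crit}}(T)=\sup\{\rho: A(\rho)\le 0\}.
\]
Continuity gives $A(\rho_{\mathrm{crit}})=0$ and $A>0$ just above it. For $A\le 0$ on the entire region below and $A>0$ on the entire region above, I need more than continuity: a single sign change. In the affine model above this is immediate, since $A$ is increasing in $\rho$ whenever $a_{\mathrm{match}}>a_{\mathrm{mis}}$. In general I would state explicitly a monotonicity of $A$ in $\rho$, inherited from the growing fraction of correctly matched stage pairs. This is the main obstacle: the listed hypotheses only guarantee a crossing, not uniqueness. I would first try to derive monotonicity from the mixture form; failing that, I would add monotonicity of $A$ as part of the "shared stage-conditional objective" assumption.

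\textbf{Transfer to the success-rate axis.} The monotone coupling says $\rho_c(T;\theta^{\mathrm{single}})$ is non-decreasing in $\mathrm{SR}$. I would define
\[
\tau(T)=\inf\{\mathrm{SR}:\rho_c>\rho_{\mathrm{crit}}\}.
\]
Monotonicity then gives $\rho_c\le\rho_{\mathrm{crit}}$ for $\mathrm{SR}<\tau$, hence $A\le 0$, and $\rho_c>\rho_{\mathrm{crit}}$ for $\mathrm{SR}>\tau$, hence $A>0$. This yields Eq.~\eqref{eq:gradient_transition}; the point $\mathrm{SR}=\tau$ itself is excluded, so plateaus in the coupling cause no trouble.

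\textbf{First-order interpretation and consistency with Definition~\ref{def:tau}.} A first-order Taylor expansion of the target loss under a co-training step $-\eta(g_S+g_T)$ shows that the extra decrease contributed by legacy data is
\[
\eta\,\langle g_S,g_T\rangle+O(\eta^2).
\]
This is non-positive below $\tau$ and positive above it. Consequently the first-order expected gain is positive exactly above $\tau$, so the $\tau(T)$ constructed here coincides with the infimum in Eq.~\eqref{eq:transfer_threshold}, to first order.
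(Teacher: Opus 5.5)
Your proposal follows the paper's own proof: define $A(\rho)$ as the expected alignment at decodability $\rho$, use the stage decomposition to get $A\le 0$ near chance and $A>0$ at high decodability, invoke continuity for a zero crossing $\rho_{\mathrm{crit}}(T)$, and transport it to $\tau(T)$ via monotone coupling. Your extra care tightens that same argument rather than changing it. The paper goes directly from ``$A$ crosses zero'' to the two-sided inequality, tacitly assuming a single sign change, which you correctly note needs monotonicity of $A$ (from your matched/mismatched mixture or as an added assumption); your infimum definition of $\tau(T)$ also makes the transport step rigorous when the coupling has plateaus.
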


The result follows by decomposing both gradients over latent task
stages.
Near chance-level decodability, cross-stage terms cancel or
conflict; once stages are reliably decoded, positively aligned
same-stage terms receive greater weight.
Continuity then implies a crossing point in decodability, which
maps to $\tau(T)$ through the monotone relation with standalone
success.

\paragraph{Why gains peak at moderate baselines.}
A correctly routed legacy sample provides information in proportion
to the target policy's remaining within-stage uncertainty:
\begin{equation}
I_{\mathrm{legacy}}
=
\rho\,
\overline{H}_{\mathrm{within}}
-
\varepsilon_{\mathrm{dom}},
\label{eq:legacy_information}
\end{equation}
where $\varepsilon_{\mathrm{dom}}$ captures irreducible
configuration-specific mismatch.
We further assume that
$\overline{H}_{\mathrm{within}}
=\eta(1-\mathrm{SR})$ to first order.
Legacy data therefore becomes useful only after the transfer
threshold, while its potential value decreases as standalone
success approaches saturation.

\begin{theorem}[Inverted-U gain law]
\label{thm:invertedU}
Under the assumptions above, the expected co-training gain follows
\begin{equation}
\begin{aligned}
\mathbb{E}\!\left[\Delta\mathrm{SR}\mid\mathrm{SR}\right]
&=
\left[
\kappa(1-\mathrm{SR})-\delta(\mathrm{SR})
\right] \\
&\quad \times
\mathbb{1}\!\left[\mathrm{SR}>\tau(T)\right],
\end{aligned}
\label{eq:inverted_u}
\end{equation}
where $\kappa>0$ is task-dependent and $\delta(\mathrm{SR})$
captures configuration-conflict costs that may increase near
saturation.
\end{theorem}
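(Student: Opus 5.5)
We prove Theorem~\ref{thm:invertedU} by a first-order (one-step) analysis of co-training, splitting on whether $\mathrm{SR}$ lies below or above $\tau(T)$ and relying on Theorem~\ref{thm:threshold} for the split. We model the co-trained policy as the single-configuration optimum followed by an additional descent contribution from legacy samples, $\theta^{\mathrm{co}}=\theta^{\mathrm{single}}-\alpha\,g_S$ with a small effective step $\alpha>0$. We then expand the target success rate to first order in this perturbation:
\[
\Delta\mathrm{SR}\approx \alpha\,\lambda\,\mathbb{E}\langle g_S,g_T\rangle - \text{(conflict term)},
\]
where $\lambda>0$ converts a decrease in target loss into success rate. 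We assume this conversion is monotone, just as $\rho_c$ is assumed monotone in $\mathrm{SR}$.

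\emph{Below the threshold.} For $\mathrm{SR}<\tau(T)$, Theorem~\ref{thm:threshold} gives $\mathbb{E}\langle g_S,g_T\rangle\le 0$, so the first-order gain is non-positive. The definition of $\tau(T)$ in Definition~\ref{def:tau} then says that no positive expected gain occurs. We fold any residual non-positive value into the indicator, which yields the factor $\mathbb{1}[\mathrm{SR}>\tau(T)]$. This step is essentially definitional: the indicator encodes ``no measurable benefit'', i.e.\ we take the expected gain to be $0$ rather than negative in Phase~I, and this convention is justified by the zero-gradient-alignment regime.

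\emph{Above the threshold.} For $\mathrm{SR}>\tau(T)$, we identify the positive alignment term with the information delivered by correctly routed legacy samples, so that $\alpha\lambda\,\mathbb{E}\langle g_S,g_T\rangle \propto I_{\mathrm{legacy}}$ as in~\eqref{eq:legacy_information}. Substituting $\overline{H}_{\mathrm{within}}=\eta(1-\mathrm{SR})$ gives
\[
\rho\,\eta(1-\mathrm{SR})-\varepsilon_{\mathrm{dom}}.
\]
Above the threshold, decodability $\rho$ is bounded below by $\rho_{\mathrm{crit}}$ and above by $1$. To first order we treat it as a task-dependent constant, or absorb its slow variation, and set $\kappa=c\,\rho\,\eta>0$. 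The remaining pieces, namely $\varepsilon_{\mathrm{dom}}$ together with the second-order curvature penalty $\tfrac{\alpha^2}{2}g_S^\top\nabla^2 g_S$ from configuration-specific components of $g_S$, are collected into $\delta(\mathrm{SR})$. This term grows near saturation because the within-stage signal vanishes there while the mismatch does not.

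\emph{Main obstacle.} The hardest step is justifying that $\rho$ can be treated as constant so that the gain is exactly linear in $(1-\mathrm{SR})$. Since $\rho$ increases with $\mathrm{SR}$, the honest statement is $\kappa(\mathrm{SR})=c\,\eta\,\rho_c(\mathrm{SR})$. I would first try to bound $\rho_c$ between $\rho_{\mathrm{crit}}$ and $1$, and then either state the law with $\kappa$ evaluated at a representative $\rho$ or absorb the deviation into $\delta$. The inverted-U shape then follows without constancy: the expression is zero below $\tau(T)$, jumps to a positive value just above $\tau(T)$ provided $\kappa(1-\tau)>\delta(\tau)$, and decreases toward $-\delta(1)\le 0$ as $\mathrm{SR}\to1$.
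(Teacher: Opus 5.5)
Your proposal is correct and follows essentially the same route as the paper's proof. It splits at $\tau(T)$ via the transfer-threshold theorem, with the indicator encoding the inactive region; it then uses the residual-entropy model $I_{\mathrm{legacy}}=\rho\,\overline{H}_{\mathrm{within}}-\varepsilon_{\mathrm{dom}}$, substitutes $\overline{H}_{\mathrm{within}}=\eta(1-\mathrm{SR})$, absorbs the routing factor $\rho$ into $\kappa$, and subtracts $\delta(\mathrm{SR})$. Your explicit first-order expansion and your placement of $\varepsilon_{\mathrm{dom}}$ into $\delta(\mathrm{SR})$ (where the paper folds the domain-dependent constants into $\kappa$) are only more explicit bookkeeping of the same argument, and your remark that $\kappa$ really inherits the $\mathrm{SR}$-dependence of $\rho$ identifies the approximation the paper makes without saying so.
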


\paragraph{Proof sketch.}
Below the transfer threshold, the expected alignment between legacy
and target gradients is non-positive, so legacy data provides no
first-order gain.
Above the threshold, a legacy sample contributes useful supervision
when it is associated with the correct latent stage.
The probability of correct stage association increases with stage
decodability $\rho$, while the value of a correctly routed sample is
proportional to the remaining within-stage uncertainty.
Under
$\overline{H}_{\mathrm{within}}=\eta(1-\mathrm{SR})$,
this contribution decreases as standalone success increases.
Subtracting the configuration-conflict cost
$\delta(\mathrm{SR})$ yields
Equation~\ref{eq:inverted_u}.

\paragraph{Connection to domain adaptation.}
The transfer-threshold transition can also be interpreted through
classical domain-adaptation bounds.
Below the threshold, the target representation has not yet formed
shared stage structure, so the error of the best joint hypothesis
across legacy and target configurations can remain large.
After the threshold is crossed, a shared stage-conditional hypothesis
becomes available, reducing the joint-error term.

Equation~\ref{eq:gradient_transition} accounts for the absence of
gain in Phase~I, while Equation~\ref{eq:inverted_u} predicts the
large gains at intermediate baselines and their decline near
saturation.
Full proofs and additional diagnostics are provided in
Appendix~\ref{app:theory}.

%% file: sections/from_theory_to_practice.tex
\section{From Theory to Practice}
\label{sec:practice}

The task-dependent transfer threshold $\tau(T)$ provides a practical criterion for hardware iteration.
Rather than deciding in advance how much legacy data to reuse, we first ask whether the new-hardware policy has reached the level of standalone performance at which legacy data begins to help.
This criterion motivates the phase-aware data collection rule below.

\paragraph{Phase-aware data collection rule.}
For each task $T$, we use the following rule:
\begin{enumerate}[topsep=2pt,itemsep=1pt,leftmargin=*]
    \item Estimate task difficulty from the task horizon $L(T)$ and spatial tolerance $\epsilon(T)$ as
    $H(T)=L(T)\log(1/\epsilon(T))$, and use it to estimate the transfer threshold $\hat{\tau}(T)$.
    \item Train a standalone policy on new-hardware data and measure its success rate $\mathrm{SR}_2(T)$.
    \item If $\mathrm{SR}_2(T)<\hat{\tau}(T)$, collect more new-hardware data; otherwise, co-train with legacy data.
\end{enumerate}

This rule is actionable because all required quantities are available before large-scale co-training.
Task horizon and tolerance can be estimated from the task specification, while $\mathrm{SR}_2(T)$ can be measured from a small standalone training run.
The rule also avoids aggregate decisions: a single hardware upgrade may contain Phase~I tasks that still need new-hardware data, Phase~II tasks that are ready for co-training, and Phase~III tasks where additional co-training has little marginal value.
Thus, the budget should be spent on moving below-threshold tasks across $\tau(T)$ rather than blindly mixing all legacy data into all tasks.

\subsection{Validation on a Mobile Dual-Arm Watering Task}
\label{sec:validation}

We validate the phase-aware data collection rule on a held-out mobile dual-arm watering task.
Unlike the fixed-base insertion experiments, watering enables the mobile base and uses the full $26$-DoF configuration.
The task combines navigation, dual-arm grasping, spout alignment, and pouring (Figure~\ref{fig:watering}).

We estimate the task complexity as $H(T)\approx44$, which is close to that of pen insertion despite the longer horizon because watering has a looser spatial tolerance.
Based on the complexity--threshold relation derived from the main tasks, we expect watering to have a relatively low transfer threshold.
Combining this estimate with a pilot standalone learning curve suggests that approximately $1$--$2$ hours of Gen-2 demonstrations should be sufficient to enter the high-gain regime.
We therefore evaluate three Gen-2 data budgets: $0.5$h, $1.5$h, and $8$h, chosen to probe the regions below, near, and well above the predicted crossing point.

\begin{figure}[t]
\centering
\includegraphics[width=\columnwidth]{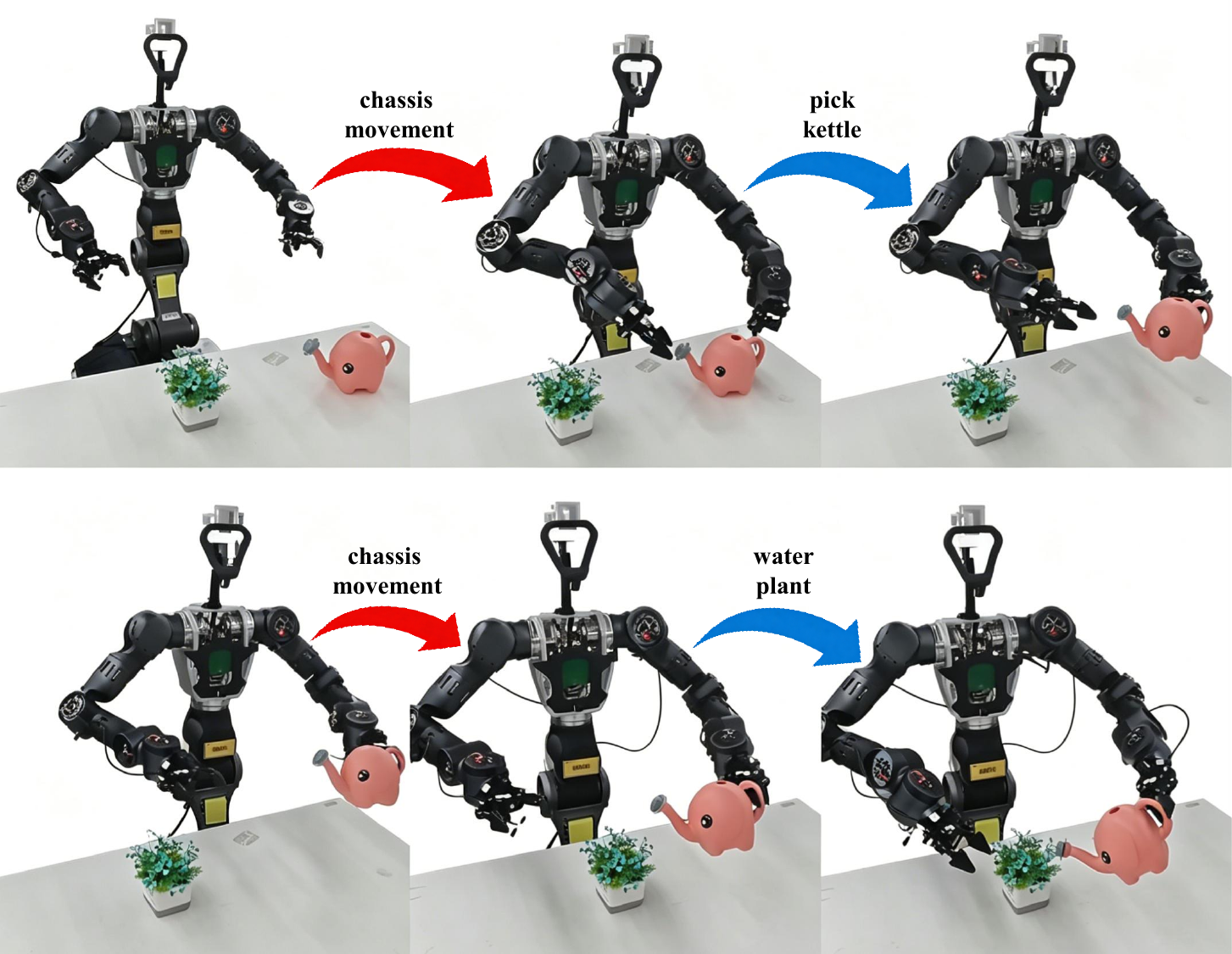}
\caption{
Mobile dual-arm watering task used for held-out validation.
The task consists of navigating to a kettle, grasping it with both arms, moving to a plant, aligning the spout, and pouring.
}
\label{fig:watering}
\end{figure}

We collect Gen-2 watering demonstrations at three data budgets: $0.5$h, $1.5$h, and $8$h. For all three conditions, the co-trained policy uses the same fixed $8$h Gen-1 watering dataset and the same training procedure, so only the amount of new-hardware data changes. Table~\ref{tab:practice} reports the resulting phase sweep. At $0.5$h, standalone performance remains below the transfer threshold, and co-training provides no measurable benefit. At $1.5$h, the task enters Phase~II, and co-training yields gains of $38$--$40$ percentage points. At $8$h, standalone performance is already high, and the gain from legacy data falls to the noise floor. The held-out task therefore exhibits the full three-phase pattern predicted by the model.

\begin{table}[t]
\centering
\small
\resizebox{\linewidth}{!}{%
\begin{tabular}{llcccc}
\toprule
\textbf{New-hw} & \textbf{Sub-stage} & \textbf{Single} & \textbf{Co-train} & $\Delta$ & \textbf{$p$} \\
\midrule
\multirow{2}{*}{$0.5$h} & Pick kettle & 3.3\%  & 3.3\%  & \textbf{0.0} & n.s. \\
                        & Water plant & 1.7\%  & 1.7\%  & \textbf{0.0} & n.s. \\
\midrule
\multirow{2}{*}{$1.5$h} & Pick kettle & 51.7\%  & 91.7\%  & \textbf{+40.0} & $1.5{\times}10^{-6}$ \\
                        & Water plant & 40.0\%  & 78.3\%  & \textbf{+38.3} & $3.5{\times}10^{-5}$ \\
\midrule
\multirow{2}{*}{$8$h}   & Pick kettle & 86.7\%  & 85.0\%  & \textbf{$-$1.7} & n.s. \\
                        & Water plant & 75.0\%  & 78.3\%  & \textbf{+3.3} & n.s. \\
\bottomrule
\end{tabular}
}
\caption{
Results on the held-out mobile dual-arm watering task.
Varying only the new-hardware data budget yields Phase~I at $0.5$h, Phase~II at $1.5$h, and Phase~III at $8$h.
$p$ values are from two-sided Fisher's exact tests.
}
\label{tab:practice}
\end{table}

The validation supports the data-efficiency claim behind the rule.
For this task, $1.5$h of new-hardware data is enough to reach the high-gain co-training regime, while collecting $8$h raises standalone performance but leaves little residual benefit for legacy data.
In other words, the rule does not recommend collecting as much new data as possible; it recommends collecting enough new data to make legacy demonstrations useful.
Additional details on the task-complexity estimate, sub-stage success
criteria, and controlled variables are provided in
Appendix~\ref{app:watering_validation}.

%% file: sections/conclusion.tex
\section{Conclusion and Discussion}
\label{sec:conclusion}

We studied when legacy demonstrations should be reused after a robot undergoes hardware iteration.
Our results challenge the common intuition that more cross-configuration data is uniformly helpful.
Instead, the benefit of legacy data depends on the phase of each task--configuration pair.
Below a task-dependent transfer threshold $\tau(T)$, co-training provides no measurable gain; after the threshold is crossed, it can produce large improvements, while the benefit diminishes near saturation.
On flower insertion, this contrast appears as $10.0\%\!\to\!10.0\%$ at a low standalone baseline and $23.3\%\!\to\!86.7\%$ after crossing the threshold.

The broader implication is that hardware iteration should not be treated as a binary data-reuse decision.
A practitioner should ask not only whether legacy data is compatible with the new hardware, but also whether the new configuration has reached the standalone performance at which legacy data begins to help.
Our theory characterizes this point with a task-dependent transfer threshold and explains the overall trend with an inverted-U gain law relating standalone performance to co-training gain.
The held-out watering task further shows how this three-phase pattern can guide the allocation of new-hardware data.
Thus, legacy data complements rather than replaces new-hardware data, becoming useful only after the new configuration crosses the transfer threshold.

\paragraph{Limitations and future work.}
Our experiments cover two hardware generations on a single wheeled-humanoid platform, with one VLA backbone and a limited set of manipulation tasks.
The task-complexity relation $\tau(T)\approx\alpha+\beta H(T)$ should therefore be viewed as a theoretical prediction supported by initial evidence, not a fully validated scaling law.
Future work should test the phase model across more robots, model families, and task types, and directly measure per-source gradient alignment during co-training.
Such studies would clarify whether the transfer threshold generalizes across embodied learning systems.
Ultimately, our results suggest a more measured view of data reuse in embodied AI: past experience helps not when it is abundant, but when the new system has learned enough structure to understand it.

%% file: sections/appendix.tex
\appendix

\section{Additional Experimental Details}
\label{app:experimental_details}

The main paper specifies the robot hardware, task suite, dataset
sizes, co-training protocol, and primary evaluation procedure.
This section provides the additional implementation and evaluation
details needed for reproducibility.

\subsection{Data Representation and Preprocessing}
\label{app:data_preprocessing}

Each trajectory contains multi-view RGB observations and
proprioceptive states.
Before training, camera images are resized to the model input
resolution.
The proprioceptive input consists of joint angles and gripper
aperture.

\subsection{Model Architecture and Optimization}
\label{app:model_training}

The $\pi_{0.5}$ backbone consists of a SigLIP vision encoder,
a Gemma-2B vision-language trunk, and an action-chunking
flow-matching head.
All model parameters are fine-tuned.

Unless otherwise stated, optimization uses AdamW with
$\beta_1=0.9$ and $\beta_2=0.95$.
The learning rate is linearly warmed up for $2{,}000$ steps to
$10^{-4}$ and then annealed to $10^{-5}$ using a cosine schedule.
Each policy is trained for $40{,}000$ steps.
The main experiments use $16$ NVIDIA A800 GPUs with $80$\,GB of
memory per GPU and a per-device batch size of $32$.

\subsection{Additional Evaluation Details}
\label{app:evaluation_protocol}

Partial task completion is counted as failure.
For example, grasping an object without completing the subsequent
insertion does not constitute a successful insertion trial.
The time limit is $30$\,s for grasping tasks and $60$\,s for
insertion tasks.

Trials are distributed across multiple evaluation periods.
Before each rollout, the robot is reset to a canonical starting
configuration.
Object placement is resampled within the predefined workspace,
while lighting and calibration conditions are held fixed within
each evaluation block.

\subsection{Statistical Testing}
\label{app:statistical_testing}

For each comparison between single-configuration training and
co-training, we construct a $2\times2$ success--failure contingency
table and apply a two-sided Fisher's exact test.
We treat $p<0.05$ as statistically significant, and comparisons
that do not reach this threshold are marked as n.s.

A negative value of $\Delta\mathrm{SR}$ is interpreted as evidence
of negative transfer only when the corresponding comparison is
statistically significant.
Otherwise, it is treated as a noise-level fluctuation.
Under this criterion, none of the negative gains observed in the
main experiments provides significant evidence of harmful transfer.

\subsection{Held-out Watering Evaluation}
\label{app:watering_details}

The two watering sub-stages are evaluated separately.
For the \emph{Pick kettle} sub-stage, success requires the robot to
navigate to the kettle and establish a stable dual-arm grasp.
For the \emph{Water plant} sub-stage, success requires the robot to
carry the kettle to the plant, align the spout with the pot, and
complete the pouring action.
Partial completion is counted as failure.

Across the new-hardware data-budget conditions reported in the main
paper, the Gen-1 legacy dataset, co-training sampling ratio,
optimization settings, and evaluation procedure are held fixed.
Only the amount of Gen-2 watering data is varied.

\section{Additional Empirical Results}
\label{app:empirical_details}

This section reports statistical details for the Phase~II results
and additional Gen-2 pen-insertion experiments that support the
data-quality and saturation analyses in Section~\ref{sec:discovery}.
The statistical protocol follows
Section~\ref{app:statistical_testing}.

\subsection{Phase II Statistical Significance}
\label{app:phase2_stats}

Table~\ref{tab:phase2_stats} reports the statistical significance
of the Phase~II improvements summarized in
Table~\ref{tab:phase2}.

\begin{table}[t]
\centering
\small
\begin{tabular}{lcc}
\toprule
\textbf{Task} & \textbf{$\Delta\mathrm{SR}$} & \textbf{$p$} \\
\midrule
Pen insertion (Gen-2)
    & +26.6 & $\approx 4.0\times10^{-5}$ \\
Flower insertion (Gen-2)
    & +63.4 & $\approx 1.8\times10^{-12}$ \\
\bottomrule
\end{tabular}
\caption{
Statistical significance of the Phase~II improvements reported in
Table~\ref{tab:phase2}.
}
\label{tab:phase2_stats}
\end{table}

\subsection{Pen-Insertion Data Quality and Saturation}
\label{app:pen_insertion_details}

Table~\ref{tab:pen_insertion_details} compares three Gen-2
pen-insertion data settings.
The refined batch reaches a higher standalone success rate than the
early batch despite containing fewer demonstration hours.
Combining the two batches further raises standalone performance to
$85.0\%$, while the co-training gain decreases to $+8.3$ points.
These results support both the data-quality observation and the
diminishing gain near saturation.

\begin{table}[t]
\centering
\small
\resizebox{\linewidth}{!}{%
\begin{tabular}{lcccc}
\toprule
\textbf{Gen-2 data} &
\textbf{Hours} &
\textbf{Single} &
\textbf{Co-train} &
\textbf{$\Delta\mathrm{SR}$} \\
\midrule
Early
    & $18.63$\,h & 56.7\% & 65.0\% & +8.3 \\
Refined
    & $13.58$\,h & 71.7\% & 98.3\% & +26.6 \\
Early + refined
    & $32.21$\,h & 85.0\% & 93.3\% & +8.3 \\
\bottomrule
\end{tabular}%
}
\caption{
Gen-2 pen-insertion results using the early, refined, and combined
datasets.
}
\label{tab:pen_insertion_details}
\end{table}

\section{Full Theoretical Details}
\label{app:theory}

This appendix extends the theoretical account in
Section~\ref{sec:analysis}.
For completeness, we restate the notation required by the proofs,
make the assumptions explicit, and provide the derivations of the
transfer-threshold transition and the inverted-U gain law.

\subsection{Notation and Assumptions}
\label{app:theory_defs}

Fix a task $T$ and a target hardware configuration $c$.
Let $\pi_\theta(a\mid s,o)$ denote a policy that maps proprioceptive
state $s$ and visual observation $o$ to action $a$.
A trajectory is decomposed into an ordered set of latent stages
\[
\mathcal{Z}(T)=\{z_1,\ldots,z_K\},
\]
where $z(s,o)\in\mathcal{Z}(T)$ denotes the ground-truth stage of
state--observation pair $(s,o)$.

Let $\phi_\theta(s,o)$ denote the policy's internal representation.
We restate stage decodability as
\begin{equation}
\rho_c(T;\theta)
=
\max_g
\Pr_{(s,o)}
\left[
g\!\left(\phi_\theta(s,o)\right)=z(s,o)
\right],
\label{eq:app_stage_decodability}
\end{equation}
where the maximum is taken over stage decoders $g$.

Write $\mathrm{SR}_c(T;\theta)$ for the standalone end-to-end
success rate on task--configuration pair $(T,c)$, and abbreviate
the success rate after standalone training as $\mathrm{SR}$.
For completeness, the task-dependent transfer threshold is
\begin{equation}
\tau(T)
=
\inf
\left\{
\mathrm{SR}:
\mathbb{E}
\left[
\Delta\mathrm{SR}\mid\mathrm{SR}
\right]>0
\right\}.
\label{eq:app_transfer_threshold}
\end{equation}

\begin{assumption}[Monotone coupling]
\label{ass:mono}
The stage decodability
$\rho_c(T;\theta^{\mathrm{single}})$ is non-decreasing with
standalone success rate $\mathrm{SR}$.
Thus, $\mathrm{SR}$ can be used as an observable proxy for latent
stage decodability.
\end{assumption}

\begin{assumption}[Shared stage-conditional objective]
\label{ass:share}
For each stage $z$, legacy and target demonstrations provide
supervision toward a shared stage-conditional objective, up to a
configuration-specific residual error
$\varepsilon_{\mathrm{dom}}$.
\end{assumption}

\begin{assumption}[Continuity of expected alignment]
\label{ass:continuity}
Expected legacy--target gradient alignment varies continuously with
stage decodability.
\end{assumption}

\subsection{Transfer-Threshold Transition}
\label{app:threshold_proof}

Let
\[
g_T(\theta)
=
\nabla_\theta\mathcal{L}_{\mathrm{target}}(\theta),
\qquad
g_S(\theta)
=
\nabla_\theta\mathcal{L}_{\mathrm{legacy}}(\theta)
\]
denote the expected gradients from target and legacy data.
Legacy data improves the target objective to first order when
$\langle g_S,g_T\rangle>0$.

\begin{lemma}[Stage structure controls gradient alignment]
\label{lem:align}
When stage decodability is near chance,
\[
\mathbb{E}\langle g_S,g_T\rangle\le 0.
\]
When stages are sufficiently decodable, same-stage legacy and target
gradients dominate, giving
\[
\mathbb{E}\langle g_S,g_T\rangle>0
\]
whenever learning signal remains.
\end{lemma}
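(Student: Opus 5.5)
We decompose both expected gradients over the latent stages $\mathcal{Z}(T)=\{z_1,\ldots,z_K\}$ and write the cross term as a weighted sum of stage-pair inner products. Concretely, the target gradient is $g_T=\sum_{k} p_k\, g_T^{(k)}$, where $p_k$ is the target stage occupancy and $g_T^{(k)}$ the gradient restricted to stage-$z_k$ samples. For the legacy gradient, note that the policy only has access to $\phi_\theta(s,o)$. A legacy sample of true stage $z_j$ is therefore processed as if it belonged to whichever stage the representation routes it to. We model this with a routing matrix $R_{jk}(\rho)$, the probability that a true-$z_j$ legacy sample is treated as stage $z_k$. Its diagonal mass is controlled by decodability: $\sum_j q_j R_{jj}\ge\rho$ in the best case, and $R_{jk}\approx 1/K$ near chance. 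This gives
\[
\mathbb{E}\langle g_S,g_T\rangle=\sum_{j,k} q_j\,R_{jk}(\rho)\,A_{jk},\qquad A_{jk}=\bigl\langle g_S^{(j\to k)},\,p_k g_T^{(k)}\bigr\rangle,
\]
so the sign question reduces to how $R(\rho)$ reweights diagonal versus off-diagonal terms.

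\textbf{Same-stage terms.} We use Assumption~\ref{ass:share}. For correctly routed legacy samples, $g_S^{(k\to k)}=g^{(k)}_\star+e_k$ and $g_T^{(k)}=g^{(k)}_\star+e'_k$, where $g^{(k)}_\star$ points toward the shared stage-conditional optimum and $\|e_k\|,\|e'_k\|\lesssim\varepsilon_{\mathrm{dom}}$. Hence
\[
A_{kk}\ge p_k\bigl(\|g^{(k)}_\star\|^2-O(\varepsilon_{\mathrm{dom}}\|g^{(k)}_\star\|)\bigr),
\]
which is strictly positive whenever the residual learning signal satisfies $\|g^{(k)}_\star\|\gg\varepsilon_{\mathrm{dom}}$. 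This is the meaning of the clause ``whenever learning signal remains.''

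\textbf{Cross-stage terms.} Misrouted samples pull stage-$k$ behavior toward the stage-$j$ conditional action, for example closing the gripper during approach. We take $A_{jk}$ for $j\neq k$ to be non-positive on average, or at least to sum to something that dominates the diagonal under uniform routing.

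\textbf{The two regimes.} At chance, $R_{jk}=1/K$ uniformly. The diagonal then carries weight only $1/K$ and the $K-1$ conflicting off-diagonal entries dominate, so
\[
\mathbb{E}\langle g_S,g_T\rangle=\tfrac1K\sum_{j,k}q_jA_{jk}\le 0.
\]
For large $\rho$, the off-diagonal mass is at most $1-\rho$, giving
\[
\mathbb{E}\langle g_S,g_T\rangle\ge\rho\min_k A_{kk}-(1-\rho)\max_{j\neq k}|A_{jk}|,
\]
which is positive once $\rho>\max|A_{jk}|/(\min A_{kk}+\max|A_{jk}|)$. These two endpoint signs, together with Assumption~\ref{ass:continuity}, are exactly what Theorem~\ref{thm:threshold} requires.

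\textbf{Main obstacle.} The chance-level claim is the weak link. The shared objective alone does not force the off-diagonal terms to be non-positive, since cross-stage gradients could be orthogonal or even mildly positive through shared features such as arm kinematics. I would first try to turn this into an explicit condition: the stages are behaviorally distinct, meaning $\sum_{j\neq k}q_j A_{jk}\le -\sum_k q_k A_{kk}$. This is a ``stage-conflict'' hypothesis that must be stated as part of what ``near chance'' means. If that cannot be justified from the existing assumptions, I would weaken the first conclusion to $\mathbb{E}\langle g_S,g_T\rangle\le O(\varepsilon)$ and interpret it as ``no first-order improvement.''
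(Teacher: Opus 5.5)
Your argument follows the same route as the paper's proof. Both decompose $g_S$ and $g_T$ over latent stages, use Assumption~\ref{ass:share} to make same-stage terms non-negative (strictly positive while learning signal remains), and let decodability shift weight from cross-stage to same-stage terms. Your routing matrix $R(\rho)$ makes explicit what the paper leaves verbal; its Lemma~\ref{lem:entropy} likewise reads $\rho$ as the probability that a legacy sample is associated with the correct stage. Your crossing condition on $\rho$ is more than the paper offers.

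The weak link you flag is genuine, and the paper does not close it. It only asserts that cross-stage terms ``either average out or conflict under Assumption~\ref{ass:share}.'' That assumption constrains same-stage supervision only. Averaging out would not even suffice, because the same-stage terms keep positive weight at chance (your $1/K$), so the total would be non-negative rather than non-positive. Your inequality $\sum_{j\neq k}q_jA_{jk}\le-\sum_k q_kA_{kk}$ is exactly the missing premise. State it as a hypothesis of the lemma rather than trying to derive it. Near-chance decodability also does not force uniform routing: sending every sample to the majority stage is also chance-level. So impose the condition for whatever chance-level $R$ you adopt.

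There is a second gap you did not flag. The identity $\mathbb{E}\langle g_S,g_T\rangle=\sum_{j,k}q_jR_{jk}A_{jk}$ pairs a legacy gradient routed to stage $k$ only with $p_k g_T^{(k)}$. This is exact only if gradients from different stages are orthogonal, e.g.\ stage-specific parameter blocks. For a shared network such as the fully fine-tuned $\pi_{0.5}$, expanding $g_T=\sum_{k'}p_{k'}g_T^{(k')}$ also produces terms $q_jR_{jk}p_{k'}\langle g_S^{(j\to k)},g_T^{(k')}\rangle$ with $k'\neq k$. These are the paper's cross-stage terms $\langle g_S^{z},g_T^{z'}\rangle$; the paper keeps them in its decomposition but simply asserts that decodability down-weights them. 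They survive at $\rho=1$, and Assumption~\ref{ass:share} does not control them, so your large-$\rho$ lower bound does not follow as written.

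You have two ways to repair this:
\begin{itemize}
\item Add the orthogonality (block-separation) assumption explicitly.
\item Treat the high-decodability regime directly. With correct routing, Assumption~\ref{ass:share} gives $g_S\approx\sum_k q_k g_\star^{(k)}$ and $g_T\approx\sum_k p_k g_\star^{(k)}$. If legacy and target stage occupancies match ($q\approx p$, natural for the same task), then $\langle g_S,g_T\rangle\ge\|G\|^2-O(\varepsilon_{\mathrm{dom}}\|G\|+\varepsilon_{\mathrm{dom}}^2)$ with $G=\sum_k p_k g_\star^{(k)}$. This is positive whenever the aggregate learning signal dominates the domain residual.
\end{itemize}
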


\begin{proof}
Decompose the expected gradient from each data source by latent
stage:
\begin{equation}
g_X
=
\sum_{z\in\mathcal{Z}(T)}
\pi_z^X g_X^z,
\qquad
X\in\{S,T\},
\label{eq:app_gradient_decomposition}
\end{equation}
where $\pi_z^X$ is the probability of stage $z$ under source $X$
and $g_X^z$ is its stage-conditional gradient.

When the target representation does not encode stage identity,
legacy samples cannot be reliably associated with the corresponding
target stage.
The resulting cross-stage terms either average out or conflict under
Assumption~\ref{ass:share}, yielding non-positive expected
alignment.

Once stages become decodable, the diagonal terms
$\langle g_S^z,g_T^z\rangle$ receive greater weight.
Because legacy and target demonstrations share a stage-conditional
objective, these terms are non-negative and become strictly positive
when residual learning signal remains.
\end{proof}

\begin{proof}[Proof of the transfer-threshold transition]
Let
\begin{equation}
A(\rho)
=
\mathbb{E}
\left\langle
g_S(\theta(\rho)),
g_T(\theta(\rho))
\right\rangle
\label{eq:app_alignment_function}
\end{equation}
denote expected gradient alignment at stage decodability $\rho$.

By Lemma~\ref{lem:align}, $A(\rho)$ is non-positive near
chance-level decodability and positive when $\rho$ is sufficiently
large.
Assumption~\ref{ass:continuity} implies that $A(\rho)$ crosses zero
at some critical decodability
$\rho_{\mathrm{crit}}(T)$.
By Assumption~\ref{ass:mono}, this critical decodability corresponds
to a threshold in standalone-success space, denoted by $\tau(T)$.
Therefore,
\begin{equation}
\begin{aligned}
\mathbb{E}\langle g_S,g_T\rangle &\le 0,
&& \mathrm{SR}<\tau(T),\\
\mathbb{E}\langle g_S,g_T\rangle &>0,
&& \mathrm{SR}>\tau(T).
\end{aligned}
\label{eq:app_gradient_transition}
\end{equation}
Thus, legacy data provides no positive first-order training signal
below the transfer threshold and becomes useful after the threshold
is crossed.
\end{proof}

\subsection{Residual Uncertainty and the Inverted-U Gain Law}
\label{app:entropy_proof}

Treating the latent stage as a hidden variable, the stage-conditional
policy decomposition is
\begin{equation}
p(a\mid s,o)
=
\sum_{z\in\mathcal{Z}(T)}
p(z\mid s,o)\,
p(a\mid s,o,z).
\label{eq:app_stage_mixture}
\end{equation}

\begin{lemma}[Residual-entropy value of legacy data]
\label{lem:entropy}
Let
$\rho=\rho_c(T;\theta^{\mathrm{single}})$
be the target policy's stage decodability after standalone training.
We model the expected information contributed by a legacy sample as
\begin{equation}
I_{\mathrm{legacy}}
=
\rho\,
\overline{H}_{\mathrm{within}}
\left(\theta^{\mathrm{single}}\right)
-
\varepsilon_{\mathrm{dom}},
\label{eq:app_legacy_information}
\end{equation}
where
\begin{equation}
\overline{H}_{\mathrm{within}}(\theta)
=
\mathbb{E}_{(s,o)}
\left[
H\!\left[p_\theta(a\mid s,o,z)\right]
\right]
\label{eq:app_within_entropy}
\end{equation}
is the remaining within-stage policy uncertainty.
\end{lemma}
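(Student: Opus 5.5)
The lemma is a modeling statement, so the plan is to derive Equation~\ref{eq:app_legacy_information} as a first-order expansion of the expected information a legacy sample carries about the target's stage-conditional action distribution, starting from the mixture decomposition~\eqref{eq:app_stage_mixture}. I will define $I_{\mathrm{legacy}}$ as the expected reduction in the target policy's conditional action entropy $H[p_\theta(a\mid s,o,\cdot)]$ after one legacy sample $(s',o',a')$ is incorporated. Equivalently, it is the mutual information between the legacy action label and the target's stage-conditional policy parameters, minus a penalty for the configuration mismatch. The argument then conditions on the routing event $E=\{g(\phi_\theta(s',o'))=z(s',o')\}$, meaning the target representation assigns the legacy sample to its correct latent stage. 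By definition~\eqref{eq:app_stage_decodability}, evaluated at $\theta^{\mathrm{single}}$ with the optimal decoder, $\Pr[E]=\rho$.

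\emph{Step 1 (correctly routed samples).} On $E$, the sample updates only the component $p(a\mid s,o,z)$ for the true stage $z$. Under Assumption~\ref{ass:share}, its label is a draw from the shared stage-conditional objective, up to $\varepsilon_{\mathrm{dom}}$. The information it can supply about that component is at most the remaining entropy $H[p_\theta(a\mid s,o,z)]$, since mutual information never exceeds entropy. To first order, with a small update that is linear in the remaining uncertainty and a constant absorbed by normalizing the information unit, the expected reduction equals $\overline{H}_{\mathrm{within}}(\theta^{\mathrm{single}})$ after averaging over $(s,o)$ as in~\eqref{eq:app_within_entropy}.

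\emph{Step 2 (misrouted samples).} On $E^c$, the sample supervises the wrong stage component. By the cross-stage argument in the proof of Lemma~\ref{lem:align}, its expected contribution to the target's within-stage uncertainty is non-positive. To first order I take it to be zero, so it contributes no positive information, and any small negative part is absorbed into the mismatch term.

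\emph{Step 3 (mismatch cost and total).} The configuration-specific residual in Assumption~\ref{ass:share} adds a systematic bias to every legacy label. I charge this as a constant cost $\varepsilon_{\mathrm{dom}}$ per sample, which is independent of routing at first order. Combining the cases with the law of total expectation gives $\rho\,\overline{H}_{\mathrm{within}}+(1-\rho)\cdot 0-\varepsilon_{\mathrm{dom}}$, which is~\eqref{eq:app_legacy_information}.

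The main obstacle is Step~1. Turning the inequality ``information $\le$ remaining entropy'' into an equality with unit coefficient requires a first-order (small-step, locally linear) approximation of the entropy reduction, together with a normalization convention. I would state this explicitly as part of the model rather than claim it as an exact identity, noting that any constant factor can be folded into $\kappa$ when the lemma is used for Theorem~\ref{thm:invertedU}.
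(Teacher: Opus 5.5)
Your proposal follows the paper's argument essentially step for step. It splits on correct stage routing with probability $\rho$, assigns supervision on the order of the remaining within-stage uncertainty to that event, gives misrouted samples no consistent contribution, and subtracts a constant $\varepsilon_{\mathrm{dom}}$; the unit coefficient is a modeling normalization that the paper later absorbs into $\kappa$, as you note.

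One small caveat: $\Pr[E]=\rho$ does not follow from the definition of $\rho_c(T;\theta)$. That quantity is computed on target-configuration pairs $(s,o)$ with the optimal decoder, whereas legacy samples come from the Gen-1 observation distribution. As in the paper, this identification is part of the model rather than a consequence of the definition.
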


\begin{proof}
A legacy demonstration can contribute two forms of information:
stage-routing information and within-stage action supervision.
With probability $\rho$, the sample is associated with the correct
stage and contributes supervision proportional to the remaining
within-stage uncertainty.
With probability $1-\rho$, it is associated with a mismatched stage,
and its expected contribution does not consistently improve the
target stage-conditional policy under
Assumption~\ref{ass:share}.
The irreducible configuration-specific component is represented by
$\varepsilon_{\mathrm{dom}}$.
\end{proof}

\begin{assumption}[Linear capability--entropy coupling]
\label{ass:lincap}
To first order, residual within-stage uncertainty is proportional to
the unsolved fraction of the task:
\begin{equation}
\overline{H}_{\mathrm{within}}
\left(\theta^{\mathrm{single}}\right)
=
\eta(1-\mathrm{SR}),
\qquad
\eta>0.
\label{eq:app_entropy_coupling}
\end{equation}
\end{assumption}

\begin{assumption}[Bounded saturation interference]
\label{ass:sat}
Co-training introduces a non-negative configuration-conflict cost
$\delta(\mathrm{SR})$ that is small away from saturation and may
increase as $\mathrm{SR}\to1$.
\end{assumption}

\begin{proof}[Proof of the inverted-U gain law]
Below $\tau(T)$, the transfer-threshold result gives no positive
first-order legacy contribution.
We encode this inactive region using
$\mathbb{1}[\mathrm{SR}>\tau(T)]$.

Above the threshold, Lemma~\ref{lem:entropy} gives a useful
contribution proportional to the remaining within-stage uncertainty.
Substituting Assumption~\ref{ass:lincap} produces a term proportional
to $(1-\mathrm{SR})$.
On the active region, the positive stage-routing factor and fixed
domain-dependent constants are absorbed into the task-dependent
coefficient $\kappa>0$.
Subtracting the configuration-conflict cost from
Assumption~\ref{ass:sat} yields
\begin{equation}
\begin{aligned}
\mathbb{E}\!\left[
\Delta\mathrm{SR}\mid\mathrm{SR}
\right]
&=
\left[
\kappa(1-\mathrm{SR})
-
\delta(\mathrm{SR})
\right] \\
&\quad\times
\mathbb{1}
\left[
\mathrm{SR}>\tau(T)
\right].
\end{aligned}
\label{eq:app_inverted_u}
\end{equation}

The gain is inactive below the transfer threshold.
After the threshold is crossed, the remaining-uncertainty term is
largest at moderate standalone performance and decreases toward
saturation.
This produces the inverted-U gain pattern.
\end{proof}

\subsection{Training-Loss Diagnostics}
\label{app:loss_dynamics}

\begin{figure}[t]
\centering
\includegraphics[width=\columnwidth]
{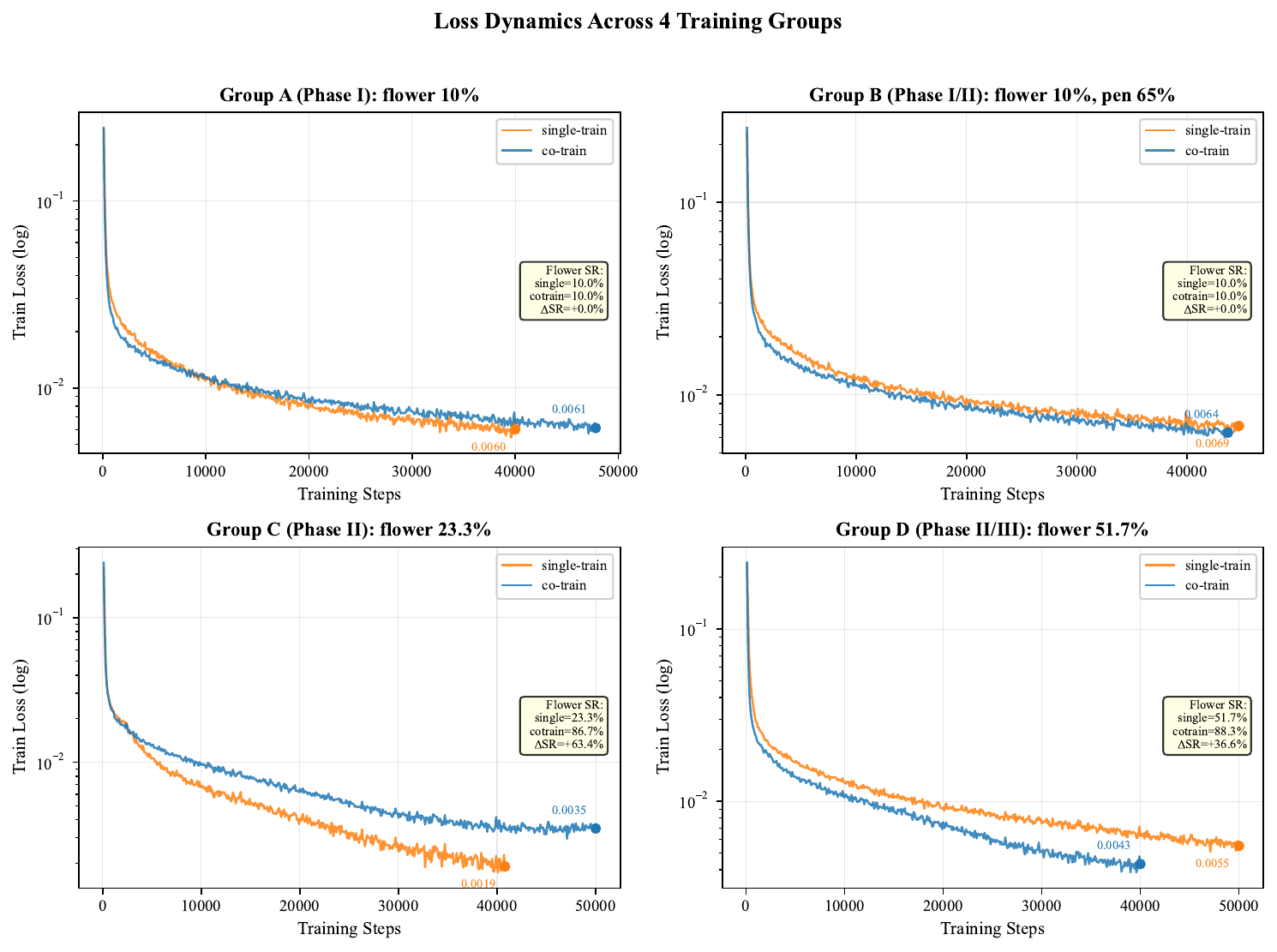}
\caption{
Training-loss dynamics across four matched standalone-training and
co-training conditions.
Phase~I groups show closely overlapping loss curves, while
higher-baseline groups show larger separation.
}
\label{fig:loss_dynamics}
\end{figure}

Direct per-source gradient alignment was not recorded during
training, so the loss trajectories in
Figure~\ref{fig:loss_dynamics} are used only as an indirect
diagnostic.
The Phase~I curves remain close, whereas larger separation appears
in the higher-baseline groups.
This pattern is consistent with, but does not directly establish,
the gradient-alignment account.

\subsection{Further Implications}
\label{app:corollaries}

\paragraph{Bidirectionality.}
Stage decodability and the transfer threshold are defined for each
task--configuration pair.
Consequently, two configurations included in the same co-training
run may benefit differently according to their respective standalone
performance and remaining uncertainty.
This accounts for the asymmetric improvements observed across
Gen-1 and Gen-2 evaluations without requiring a fixed source--target
direction.

\paragraph{Task-complexity prediction.}
We use
\begin{equation}
H(T)
=
L(T)\log\!\left(1/\epsilon(T)\right),
\label{eq:app_task_complexity}
\end{equation}
where $L(T)$ is the task horizon and $\epsilon(T)$ is its spatial
tolerance.
Longer horizons and tighter tolerances require more reliable stage
decodability, motivating the prediction that $\tau(T)$ is
non-decreasing with $H(T)$.

For the insertion tasks, pen insertion has approximately
$L=14$ and $H=42$, whereas flower insertion has approximately
$L=20$ and $H=60$.
Both tasks exhibit Phase~I near a $10\%$ standalone baseline and
enter the high-gain regime near the $20\%$ range.
These observations are consistent with the proposed monotonic
relation but are insufficient to establish a precise scaling law.
We therefore treat the relation between $H(T)$ and $\tau(T)$ as a
theoretical prediction rather than a validated empirical law.

\section{Additional Details for the Watering Validation}
\label{app:watering_validation}

This section provides the task-complexity estimate, sub-stage success
criteria, and controlled variables for the held-out watering
validation in Section~\ref{sec:validation}.

\subsection{Task-Complexity Estimate}
\label{app:watering_complexity}

The watering task includes mobile navigation, dual-arm grasping,
object transport, spout alignment, and pouring.
We estimate its task horizon as approximately $L(T)=21$ decision
steps.

The spout only needs to remain within a relatively loose region above
the pot opening for water to enter.
We therefore use a normalized spatial tolerance of
$\epsilon(T)\approx0.12$, corresponding to approximately
$12$\,cm relative to a $1$\,m reference scale.
The resulting task-complexity estimate is
\begin{equation}
\begin{aligned}
H(T)
&=
L(T)\log\!\left(1/\epsilon(T)\right) \\
&\approx
21\log(1/0.12)
\approx 44.
\end{aligned}
\label{eq:watering_complexity}
\end{equation}

This estimate is close to that of pen insertion
($H(T)\approx42$).
Although watering has a longer horizon, its spatial tolerance is
substantially looser, motivating the prediction that it should have
a relatively low transfer threshold.

\subsection{Success Criteria and Controlled Variables}
\label{app:watering_protocol}

The two watering sub-stages are evaluated separately.
For the \emph{Pick kettle} sub-stage, success requires the robot to
navigate to the kettle and establish a stable dual-arm grasp.
For the \emph{Water plant} sub-stage, success requires the robot to
carry the kettle to the plant, align the spout with the pot, and
complete the pouring action.
Partial completion is counted as failure.

Across the three new-hardware data budgets reported in
Table~\ref{tab:practice}, the $8$\,h Gen-1 legacy dataset,
co-training sampling ratio, optimization settings, and evaluation
procedure are held fixed.
Only the amount of Gen-2 watering data is varied.